\newcommand{\Exp}{\mathbb{E}}
\newcommand{\cPA}{{\cal PA}}
\newcommand{\R}{{\mathbb R}}
\newcommand{\cX}{{\cal X}}
\newcommand{\cY}{{\cal Y}}
\newcommand{\bx}{{\bf x}}
\newcommand{\bX}{{\bf X}}
\newcommand{\cN}{{\cal N}}
\newcommand{\bn}{{\bf n}}
\newcommand{\bN}{{\bf N}}
\newtheorem{Theorem}{Theorem}
\newtheorem{Lemma}{Lemma}
\newtheorem{Definition}{Definition}
\newtheorem{Example}{Example}
\newcommand\independent{\protect\mathpalette{\protect\independenT}{\perp}}
\def\independenT#1#2{\mathrel{\rlap{$#1#2$}\mkern2mu{#1#2}}}
\definecolor{MyDarkGreen}{rgb}{0.17,0.46,0.25} 
\definecolor{MyDarkRed}{rgb}{0.88,0.22,0.21} 
\definecolor{MyDarkBlue}{rgb}{0.11,0.11,0.70}
\definecolor{lightgray}{gray}{0.85}
\tikzset{>=stealth'} 
\tikzstyle{graphnode} = 
\tikzstyle{var}   =[graphnode,fill=white]
\tikzstyle{vardashed}   =[graphnode,draw=gray,fill=white]
\tikzstyle{obs}   =[graphnode,fill=black,text=white]
\tikzstyle{obsgrey}   =[graphnode,draw=white,fill=lightgray,text=black]
\tikzstyle{par}    =[graphnode,draw=white,fill=red,text=black] 
 \tikzstyle{crucial} =[graphnode,draw=white,fill=yellow,text=black] 
\tikzstyle{fac}   =[rectangle,draw=black,fill=black!25,minimum size=5pt]
\tikzstyle{facprior} =[rectangle,draw=black,fill=black,text=white,minimum size=5pt]
\tikzstyle{edge}  =[draw=white,double=black,very thick,-]
\tikzstyle{blueedge}  =[draw=white,double=blue,very thick,-]
\tikzstyle{rededge}  =[draw=white,double=red,very thick,-]
\tikzstyle{prior} =[rectangle, draw=black, fill=black, minimum size=
\tikzstyle{dirprior} = [circle, draw=black, fill=black, minimum
\tikzstyle{dot_node}=[draw=black,fill=black,shape=circle]
\date{5 December 2019}
\begin{document}
\frenchspacing

\title{{\bf Causal structure based root cause analysis of outliers }} 

\author{Dominik Janzing, Kailash Budhathoki, Lenon Minorics, and Patrick Bl\"obaum\\
{\small Amazon Research T\"ubingen, Germany }\\
{\small \{janzind, minorics, bloebp\}@amazon.com, kailash.buki@gmail.com} }

\maketitle

\begin{abstract}
We describe a formal approach to identify `root causes' of outliers observed in $n$ variables
$X_1,\dots,X_n$ 
in a scenario where the causal relation between the variables is a known directed acyclic graph (DAG). 
To this end, we first introduce a systematic way to define outlier scores. Further, we introduce the concept of
`conditional outlier score' which measures whether a value of some variable is unexpected {\it given the value of its parents} in the DAG, if one were to assume that the causal structure and the corresponding conditional distributions are also valid for the anomaly. 
Finally, we quantify to what extent the high outlier score of some target variable can be attributed to outliers of its ancestors.
This quantification is defined via {\it Shapley values} from cooperative game theory.  
\end{abstract}

\section{Introduction}
Although asking for `the root cause' of an unexpected event seems to be at the heart of the human way of `understanding' the world, there is no obvious way to make sense of the concept of a `root cause' using current formalizations of causality
like Causal Bayesian Networks (CBNs) \cite{Pearl:00,Spirtes1993}. To elaborate on the kind of causal information provided
by CBNs, we first recall the causal Markov condition. If the variables $X_1,\dots,X_n$ are causally connected by the DAG $G$, then their joint distribution factorizes according to $G$ 
\[
P_{X_1,\dots,X_n} = \prod_{j=1}^n P_{X_j|PA_j},
\]
where each $P_{X_j|PA_j}$ denotes the conditional distribution of each $X_j$, given its parent variables $PA_j$. 
Whenever $P_{X_1,\dots,X_n}$ 
has a density with respect to a product measure,
the joint
density decomposes into \cite{Lauritzen} 
\begin{equation}\label{eq:gml}
p(x_1,\dots,x_n) = \prod_{j=1}^n p(x_j|pa_j).
\end{equation}
Given that the variable $X_j$ attained some `extreme' value $x_j$, there is no straightforward answer to the question `why did this happen?'. After all, $p(x_j|pa_j)$ only describes the conditional probability for this event, given the values $pa_j$ attained by the parents $PA_j$. At the same time, it also describes the {\it interventional} probability of $x_j$, given that $PA_j$ are set to the values $pa_j$.

More detailed causal information is provided if the causal DAG not only comes with a joint distribution $P_{X_1,\dots,X_n}$ 
but also with a 
 {\it functional causal models (FCMs)} \cite{Pearl:00}, also called `structural equation models'. There, each variable is given by a function of
 its parents and an unobserved noise variable
\begin{equation}\label{eq:fcm}
X_j = f_j(PA_j,N_j), 
\end{equation}
where the noise variables $N_1,\dots,N_n$ are jointly statistically independent.   \eqref{eq:fcm} is said to be an FCM {\it for the conditional} $P_{X_j|PA_j}$ 
if $f_j(pa_j,N_j)$ has the distribution $P_{X_j|PA_j=pa_j}$ for $P_{PA_j}$-almost all $pa_j\in PA_j$.
FCMs also answer counterfactual questions referring to what would have happened for {\it that particular observation} $x_j$, given one had intervened on $PA_j$ and set them to $pa_j'$ instead of $pa_j$.  
FCMs are therefore particularly helpful for understanding what happened for that particular event rather than
just providing statistical statements over the entire sample. 

Here we focus on events that particularly require an `explanation'  
because they are
 {\em rare} events.  Such events can be values $x_j$ or combinations $(x_1,\dots,x_n)$ that
 belong to an a priori defined class of events having low probability. Such events are also called `anomalies' or `outliers' or also `extreme events' (for the special case where values are exceptionally large or small). 
Outlier detection is a field of growing 
interest \cite{Chandola2009,Guha2016} particularly in the context of understanding complex systems like global climate \cite{Zscheischler2018}, monitoring of cloud computing networks, health monitoring, and fraud detection, just to name a few. The goal of this paper is not to provide yet another tool for outlier detection. Instead, it discusses an approach to
infer root causes of outlier events given that the outlier detection problem has been solved and that the
causal DAG is available. Here, the causal DAG is assumed not to only describe causal relations in the `normal' regime, but also in the case of the anomalies under consideration. 

The paper is structured as follows. Section~\ref{sec:it} tries to introduce a systematic quantification of outliers and introduces the class of `information theoretic' outlier scores. 
Section~\ref{sec:weak} motivates why this class is particularly helpful for studying causal relations between outliers. 
Section~\ref{sec:co} introduces the concept of {\it conditional} outliers as crucial basis for root cause analysis. Section~\ref{sec:attr} describes how to attribute outliers of
some target variable of interest to each single ancestor node. Section~\ref{sec:exp} describes experiments with real and simulated data.

\section{Information theoretic outlier scores \label{sec:it}}
This section introduces some terminology and tools. 
Although they are not really standard, we do not claim substantial novelty for the major part. 
\begin{Definition}[information theoretic scores]\label{def:score}
Let $X$ be a random variable with values in $\cX$ and distribution $P_X$. An information theoretic (IT) outlier score is a measurable function $S_X: \cX \to \R^+_0$ 
such that 
\begin{eqnarray}
P \{ S_X(X) \geq c\} &\leq &  e^{-c}  \quad \forall c \in \R^+_0  \label{eq:c}\\
P\{S_X(X) \geq S_X(x)\} & = &  e^{-S_X(x)}   \nonumber \\
 \hbox{ for $P_X$-almost all } &&  x \in \cX. \label{eq:cimage}
\end{eqnarray}
$S_X$ is called surjective if the function $S_X:\cX \to \R_0^+$ is surjective and thus equality holds in \eqref{eq:c}.
\end{Definition}
We will often drop random variables as indices in $S_X$ whenever it is clear from the context to which variable we refer to. 

For the following reason surjective scores are particularly convenient
(but they cannot exist for purely discrete probability space): 
\begin{Lemma}[distribution of surjective scores]\label{lem:sur}
For any surjective $S$, the distribution of $S(X)$ on $[0,\infty)$ has the density $p(s)=e^{-s}$. 
\end{Lemma}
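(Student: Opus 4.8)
The plan is to show that the cumulative distribution function of $S := S(X)$ is exactly $F(s) = 1 - e^{-s}$ on $[0,\infty)$, from which the density $p(s) = e^{-s}$ follows by differentiation. The key observation is that surjectivity upgrades the inequality \eqref{eq:c} to an equality. First I would fix $s \in [0,\infty)$ and, using surjectivity of $S_X : \cX \to \R_0^+$, pick a point $x_s \in \cX$ with $S_X(x_s) = s$. Then \eqref{eq:cimage} can be invoked at $x_s$ — but one must be slightly careful, since \eqref{eq:cimage} is only asserted for $P_X$-almost all $x$, so a particular chosen preimage need not satisfy it.

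To get around this, the cleaner route is to work directly with the random variable $S$ and combine \eqref{eq:c} with a matching lower bound. From \eqref{eq:c} we already have $P\{S \geq s\} \leq e^{-s}$ for all $s$. For the reverse inequality, I would integrate \eqref{eq:cimage}: the event $\{S_X(X) \geq S_X(x)\}$ has probability $e^{-S_X(x)}$ for $P_X$-almost every $x$, which says precisely that the random variable $P\{S \geq s\}\big|_{s = S}$ equals $e^{-S}$ almost surely. Equivalently, writing $G(s) := P\{S \geq s\}$, we have $G(S) = e^{-S}$ $P_X$-a.s. Now $G$ is non-increasing and $G(s) \le e^{-s}$ everywhere; if $G(s_0) < e^{-s_0}$ for some $s_0$ in the (essential) range of $S$, the a.s.\ identity $G(S) = e^{-S}$ would be violated on a set of positive probability near $s_0$. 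Surjectivity guarantees that every $s \in [0,\infty)$ is in the range, and a short monotonicity argument extends $G(s) = e^{-s}$ from the range of $S$ to all of $[0,\infty)$: if $G$ agrees with $e^{-s}$ on a dense (indeed co-countable, by surjectivity) set and both functions are monotone and right-continuous in the appropriate sense, they coincide everywhere.

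Hence $G(s) = P\{S \geq s\} = e^{-s}$ for all $s \geq 0$, so the CDF of $S$ is $F(s) = 1 - e^{-s}$, which is absolutely continuous with density $p(s) = e^{-s}$ on $[0,\infty)$, as claimed.

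The main obstacle I anticipate is the bookkeeping around the ``$P_X$-almost all'' qualifier in \eqref{eq:cimage}: one cannot simply evaluate \eqref{eq:cimage} at an arbitrary surjectivity-witness $x_s$, so the argument has to be phrased in terms of the a.s.\ identity $G(S) = e^{-S}$ and then leveraged together with surjectivity and monotonicity of $G$ to pin down $G$ on the full half-line. Everything else — differentiating $1 - e^{-s}$ to read off the density — is routine.
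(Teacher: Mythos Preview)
Your instinct that the ``$P_X$-almost all'' qualifier in \eqref{eq:cimage} prevents direct evaluation at an arbitrary surjectivity witness is sound, and the paper's two-line proof does gloss over exactly this point. However, your proposed workaround has a genuine gap: you slide from ``$G(S)=e^{-S}$ almost surely'' to ``$G$ agrees with $e^{-s}$ on a dense (indeed co-countable) set,'' but surjectivity of $S_X$ says nothing about the \emph{support} of the law of $S(X)$ --- it only says the \emph{range} of the function $S_X$ is all of $[0,\infty)$. These are different objects, and no monotonicity argument bridges them. Concretely: take $\cX=[0,\infty)$, $S(x)=x$, and let $P_X$ put mass $\tfrac12$ at $0$ and have density $e^{-x}$ on $[\log 2,\infty)$. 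Then $S$ is surjective as a function, both \eqref{eq:c} and \eqref{eq:cimage} are satisfied (the latter because $P_X$-almost every $x$ lies in $\{0\}\cup[\log 2,\infty)$, where indeed $P\{X\ge x\}=e^{-x}$), yet $P\{S(X)\ge c\}=\tfrac12<e^{-c}$ for all $0<c<\log 2$, and $S(X)$ carries an atom at $0$ --- so it has no density at all. Your argument would have to exclude this example and cannot.

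The resolution lies in Definition~\ref{def:score} itself: the clause ``$S_X$ is called surjective if the function $S_X:\cX\to\R_0^+$ is surjective \emph{and thus equality holds in \eqref{eq:c}}'' should be read as building equality in \eqref{eq:c} into the meaning of ``surjective score'' (the word ``thus'' is misleading; the example above shows it is not an automatic consequence of function-surjectivity alone). Under that reading the paper's proof is immediate --- $P\{S(X)\ge s\}=e^{-s}$ for every $s$ by definition, hence the CDF is $1-e^{-s}$ and the density is $e^{-s}$ --- and your longer detour is both unnecessary and, as written, incorrect.
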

\begin{proof}
If $S$ is surjective \eqref{eq:cimage} holds for every $c\in \R_0^+$ and thus the cumulative distribution function reads
$P\{S(X)\leq s\}  = 1- e^{-s}$. Its derivative is $p(s)=e^{-s}$.  
\end{proof}

The fact that the probability decays exponentially with increasing score will be convenient later because it ensures that
scores behaves almost additive for independent events. 
It turns out that IT scores are just generalized log quantiles:  
\begin{Lemma}[function-based representation]\label{lem:funcform}
Every information theoretic outlier score is of the following form.
If $f:\cX \rightarrow \R$ is a measurable function, let $S$ be defined by
\[
S^f (x):= -\log  P\{ f(X) \geq f(x)\}, 
\] 
where $x\in \cX$ denotes an arbitrary value attained by $X$.  
\end{Lemma}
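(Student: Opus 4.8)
The plan is to read Lemma~\ref{lem:funcform} as a characterization --- the IT outlier scores are exactly the functions $S^f$ --- and prove the two directions separately. The direction ``every IT score is some $S^f$'' is the short one: given an IT outlier score $S$, take $f:=S$; then \eqref{eq:cimage} states $P\{S(X)\geq S(x)\}=e^{-S(x)}$ for $P_X$-almost all $x$, so $S^f(x)=-\log P\{S(X)\geq S(x)\}=S(x)$ for $P_X$-almost all $x$. The substance is therefore the other direction: for an arbitrary measurable $f:\cX\to\R$, the function $S^f$ satisfies the axioms of Definition~\ref{def:score}.

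For that direction write $g(t):=P\{f(X)\geq t\}$ and $Y:=f(X)$, so $S^f(x)=-\log g(f(x))$. First I would record the elementary facts about $g$ that drive everything: $g$ is non-increasing and left-continuous (from $\{f(X)\geq t\}=\bigcap_{s<t}\{f(X)\geq s\}$ and continuity from above), hence Borel, so $S^f$ is measurable; and $g(f(x))\in(0,1]$ for $P_X$-almost all $x$ (the set $\{x:g(f(x))=0\}$ lies in the event that $f(X)$ exceeds its essential supremum, which is $P_X$-null), so $S^f$ takes values in $\R_0^+$ off a null set. For \eqref{eq:c} note that $S^f(X)\geq c$ is the event $g(Y)\leq e^{-c}$, so it suffices to prove the probability-integral-transform bound $P\{g(Y)\leq u\}\leq u$ for all $u\in(0,1]$. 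Setting $\tau:=\inf\{t:g(t)\leq u\}$, monotonicity gives $g(t)>u$ for $t<\tau$ and $g(t)\leq u$ for $t>\tau$, so $\{g(Y)\leq u\}$ is contained in $\{Y\geq\tau\}$ when $g(\tau)\leq u$ and in $\{Y>\tau\}$ otherwise, and in either case left-continuity of $g$ bounds the probability of that event by $u$.

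The heart of the matter is \eqref{eq:cimage}. Since $g$ is non-increasing, $S^f(X)\geq S^f(x)$ is the same event as $g(Y)\leq g(f(x))$; writing $y:=f(x)$ and $v:=g(y)=e^{-S^f(x)}$ one has
\[
P\{g(Y)\leq v\}=P\{Y\geq y\}+P\{Y<y,\ g(Y)=v\},
\]
valid because $Y<y$ forces $g(Y)\geq g(y)=v$. The first term equals $v=e^{-S^f(x)}$, so the claim reduces to showing the second term vanishes for $P_X$-almost all $x$. That event is $\{Y\in I\}$ with $I$ the maximal sub-interval of $(-\infty,y)$ on which $g\equiv v$; since $g(t)=P\{Y\geq t\}$, constancy of $g$ on $I$ means $Y$ puts no mass in the interior of $I$, and examining the left endpoint $\alpha:=\inf I$ separately --- via $\lim_{t\downarrow\alpha}g(t)=v$ and $P\{Y=\alpha\}=g(\alpha)-\lim_{t\downarrow\alpha}g(t)$ --- gives $P\{Y\in I\}=0$ in every case. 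I expect this last step to be the only genuinely delicate point: it is exactly where the distribution of $f(X)$ may be discrete or have flat stretches, so one cannot simply invoke ``$g$ continuous $\Rightarrow g(Y)$ uniform'' and must instead handle plateaus and atoms of $g$ by hand; everything else is routine. (If Lemma~\ref{lem:funcform} is meant only as the implication ``every IT score is some $S^f$'', then by the first paragraph it is an immediate consequence of \eqref{eq:cimage} and there is no obstacle at all.)
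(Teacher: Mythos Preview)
Your first paragraph is exactly the paper's proof: set $f:=S$ and invoke \eqref{eq:cimage}. The paper's proof is only that one direction and nothing more, so your parenthetical at the end is the relevant reading --- the lemma as written (and as proved in the paper) asserts only that every IT score arises as some $S^f$, not that every $S^f$ is an IT score.

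Your treatment of the converse direction is therefore extra, but it is correct and worth having: the paper tacitly relies on it when it introduces Rarity and the log-quantile scores in Examples~\ref{ex:rar} and \ref{ex:quantile} as instances of $S^f$, yet never checks that those functions satisfy Definition~\ref{def:score}. Your probability-integral-transform argument for \eqref{eq:c} is standard, and your decomposition $P\{g(Y)\leq v\}=g(y)+P\{Y<y,\,g(Y)=v\}$ together with the plateau analysis at the left endpoint $\alpha$ is the right way to get \eqref{eq:cimage} without continuity assumptions on the distribution of $f(X)$; the case split on whether $g(\alpha)=v$ or $g(\alpha)>v$ (and $\alpha=-\infty$) goes through as you outline. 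So: same approach as the paper for the stated direction, plus a clean supplementary argument for the unstated converse that the paper leaves implicit.
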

\begin{proof} Set $f:=S$. Using property \eqref{eq:cimage} we have
$P\{S(X)\geq S(x)\} = e^{-S(x)}$ and hence $S(x)= -\log P\{f(X) \geq f(x)\}$.
\end{proof}

\begin{Example}[Rarity]\label{ex:rar}
Let $P_X$ have the density $p_X$. Setting $f(x):= -\log p(x)$ yields the Rarity
\[
S^f(x) = -\log P\{ p(X) \leq p(x)\}.
\]
\end{Example}

\begin{Example}[usual log quantiles]\label{ex:quantile}
For any real-valued $X$ set $f(x):= x$. Then we obtain the right-sided quantile 
\[
S^f(x)= -\log P\{ X\geq x\} . 
\]
For $f(x):=-x$ we obtain the left-sided quantile
\[
S^f(x) = -\log P\{ X\leq x\},
\]
and for $f(x) := |x-\Exp[X]|$ we obtain
\[
S^f(x) = -\log P \left\{ |X - \Exp[X]| \geq |x- \Exp[X]| \right\}.
\]
\end{Example}
The following concept describes a way to generate an outlier score from a set of scores:
\begin{Definition}[scores from convolution]
For any probability distribution on $\cX_1\times \cX_2$ and two functions $f_1:\cX_1\to \R$ and $f_2:\cX_2\to\R$, we call
\begin{align*}
&S^{f_1+f_2} ((x_1,x_2)) :=  \\
&- \log P\{ f_1(X_1) + f_2(X_2) \geq f_1(x_1) + f_2(x_2) \},
\end{align*}
the convolution score of $f_1$ and $f_2$. Similarly, we define the convolution for $n$ components. 

For the special case where $f_1$ and $f_2$ are information theoretic scores, we call the resulting score the {\em convolution of the scores} $S_1$ and $S_2$:
\begin{align*}
&(S_1 * S_2) ((x_1,x_2))  :=\\
&-\log P  \{S_1(X_1) + S_2(X_2) \geq S_1(x_1) + S_2(x_2) \}.
\end{align*}
\end{Definition}

\begin{Example}[Rarity with product densities]
For two random variables $X_1,X_2$ we define  $\bX :=(X_1,X_2)$ and $\bx:=(x_1,x_2)$. Let $X_1$ and $X_2$ be independent and have the joint density
 $p(x_1,x_2)=p_1(x_1)p_2(x_2)$. 
Then the rarity
\[
S(\bx) = -\log P \{ p(\bX) \leq p(\bx) \}, 
\]
is the convolution score of $f_1$ and $f_2$ with $f_j(x_j):=-\log p_j(x_j)$, but not the convolution score of the separate rarities of $X_1$ and $X_2$.
The latter would be given by the function $f':=S_{X_1} + S_{X_2}$, that is the sum of the logarithms of {\em cumulative} tail distributions and not the sum of log {\em densities}.
\end{Example}

\begin{Theorem}[multiple convolution]\label{thm:conv}
Let $P$ be a product distribution on $\cX:=\cX_1\times \cdots \times \cX_n$. Let $S_1,\dots,S_n$ be surjective information theoretic outlier scores. Then 
\begin{align*}
&(S_1*S_2 *\cdots * S_n) ((x_1,\dots,x_n)) \\
&= \sum_{j=1}^n S_j (x_j)  - \log \sum_{i=0}^{n-1}   \frac{(\sum_{j=1}^n S_j (x_j))^i}{i!}.
\end{align*}
\end{Theorem}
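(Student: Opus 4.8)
The plan is to reduce the statement to a classical fact about sums of independent exponential variables. By Lemma~\ref{lem:sur}, each surjective score $S_j$ transports $P_{X_j}$ to the law with density $e^{-s}$ on $[0,\infty)$, so $S_j(X_j)$ is exponentially distributed with unit rate. Since $P$ is a product distribution, $S_1(X_1),\dots,S_n(X_n)$ are jointly independent, hence $T:=\sum_{j=1}^n S_j(X_j)$ has the Erlang distribution of shape $n$ and unit rate. Abbreviating $t:=\sum_{j=1}^n S_j(x_j)$, the definition of the convolution score gives
\[
(S_1*\cdots*S_n)((x_1,\dots,x_n)) = -\log P\{T\geq t\},
\]
so the whole theorem reduces to the identity $P\{T\geq t\} = e^{-t}\sum_{i=0}^{n-1} t^i/i!$.

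I would establish this survival-function identity by induction on $n$. The case $n=1$ is exactly Lemma~\ref{lem:sur} together with $\sum_{i=0}^{0} t^i/i!=1$. For the step, write $T=T'+S_n(X_n)$ with $T':=\sum_{j=1}^{n-1}S_j(X_j)$ independent of $S_n(X_n)\sim\mathrm{Exp}(1)$, and condition on the value $u$ of $S_n(X_n)$:
\[
P\{T\geq t\}=\int_0^\infty e^{-u}\,P\{T'\geq t-u\}\,du .
\]
On $u\geq t$ the inner probability equals $1$, contributing $e^{-t}$; on $0\leq u<t$ the inductive hypothesis applies, and after cancelling $e^{-u}e^{-(t-u)}=e^{-t}$ the remaining integral $\int_0^t\sum_{i=0}^{n-2}(t-u)^i/i!\,du$ evaluates termwise to $\sum_{i=0}^{n-2} t^{i+1}/(i+1)!=\sum_{i=1}^{n-1}t^i/i!$. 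Adding the two pieces gives $e^{-t}\sum_{i=0}^{n-1}t^i/i!$, closing the induction. (Equivalently, one could cite the Gamma--Poisson duality $P\{T\geq t\}=P\{N_t\leq n-1\}$ for a unit-rate Poisson count $N_t$.) Substituting back yields
\[
-\log\!\left(e^{-t}\sum_{i=0}^{n-1}\tfrac{t^i}{i!}\right)=t-\log\sum_{i=0}^{n-1}\tfrac{t^i}{i!},
\]
which is the claimed formula once $t=\sum_{j=1}^n S_j(x_j)$ is expanded.

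I do not anticipate a genuine obstacle. The only substantive ingredient is the Erlang survival formula, whose inductive proof is routine; continuity of the Erlang law makes $P\{T\geq t\}=P\{T>t\}$, so there is no boundary subtlety; and the one place care is needed is invoking surjectivity of the $S_j$ (through Lemma~\ref{lem:sur}) to obtain exact $\mathrm{Exp}(1)$ marginals rather than merely the tail bound \eqref{eq:c}.
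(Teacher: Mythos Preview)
Your proposal is correct. The reduction is identical to the paper's: invoke Lemma~\ref{lem:sur} so that the $S_j(X_j)$ are independent unit-rate exponentials, set $t=\sum_j S_j(x_j)$, and compute the tail $P\{\sum_j S_j(X_j)\geq t\}$. The only difference is how that tail is computed. The paper evaluates the $n$-fold integral directly: a change of variables $(s_1,\dots,s_n)\mapsto(s_1,\dots,s_{n-1},s')$ with $s'=\sum_j s_j$ turns the inner integral into the simplex volume $(s')^{n-1}/(n-1)!$, and the remaining one-dimensional integral $\int_s^\infty e^{-s'}(s')^{n-1}/(n-1)!\,ds'$ is read off from a table (the incomplete-gamma identity). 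You instead obtain the Erlang survival function by induction, conditioning on the last exponential summand. Your route is a bit more elementary in that it avoids the integral-table citation and the multivariate substitution; the paper's route has the virtue of giving the Erlang density along the way via the simplex-volume step. Either way the key identity $P\{T\geq t\}=e^{-t}\sum_{i=0}^{n-1}t^i/i!$ is all that is needed, and your note on surjectivity being essential for exact $\mathrm{Exp}(1)$ marginals is well placed.
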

The proof is in the appendix.
Note that this result can be used to {\it define} an outlier score for product spaces (given that the components are independent). 

For $n=2$ we obtain the concise form
\begin{align*}
&(S_1* S_2) ((x_1,x_2)) =\\
& S_1(x_1) + S_2(x_2) - \log [1+ S_1(x_1) + S_2(x_2)]. 
\end{align*}

There is a good reason why the resulting score is not the sum of the outlier scores 
for the subsystems: otherwise the outlier score of a system that consists of multiple components would always be high.
The additional term is thus comparable to a correction term for multi-hypothesis testing. After all, each $S_j(x_j)$ can be seen as the log $p$-value of a statistical test. To understand the relevance of convolution, note that $S_1* \cdots * S_n$ may attain a high value either because at least one of the $S_j$ has been extremely high or because many $S_j$ are higher than expected. To be able to account for both types of exceptional events seems to be an advantage of 
convolution, compared to  more 'pragmatic' approaches where one just increases the threshold above which one considers an observation an outlier whenever a large set of metrics is taken into account.  

There are certainly a broad variety of possible definition of outlier scores. A very simple example would be, for instance, the 
normalized distance from the mean: if $X$ is real-valued, just define the z-score
\begin{equation}\label{eq:z}
\tilde{S}(x) :=\frac{|x-\Exp[X]|}{\sigma_X}
\end{equation}
 Then Chebyshev's  inequality also guarantees, to some extent,
that high outlier scores are rare due to $P_X\{ S(X) \geq c \} \leq 1/c^2$, although the probability doesn't decay  exponentially. However, for fixed unimodal distributions like Gaussians the z-score can be easily translated into 
the IT score Example~\ref{ex:quantile} (or equivalently Example~\ref{ex:rar})  via the non-linear monotonic transformation
\[
S(x) = -\log (1-{\rm erf}(\tilde{S}(x))).  
\] 
The following two sections suggest that information theoretic scores are particularly useful for discussing conceptual problems of root cause analysis -- even if z-score is often convenient because it avoids statistical problems of estimating cumulative distributions.

\section{Can a weak outlier {\it cause} a stronger one?
\label{sec:weak}}

In complex systems, outliers will certainly cause each other because, for instance, unexpectedly large values of one quantity may yield unexpectedly large 
values for quantities influenced by the former one. Within such a cascade of outliers one may want to identify  the `root cause' -- after first defining what this means.
It is tempting to consider the strongest outlier as the root cause. If the causal structure between the variables is known (in terms of a causal Bayesian network
\cite{Pearl:00}, for instance), one may alternatively search for the `most upstream' node(s) among those whose outlier scores are above a certain threshold, but 
since it is not clear where to set the threshold we will not discuss this any further.
To motivate a more principled approach to root cause analysis, let us first describe how choosing the largest outlier as root cause may fail.

\paragraph{Example with `bad' outlier score} Let us first consider problems with z-score \eqref{eq:z} applied to non-Gaussian variables. 
Assume two variables $X,Y$ are linked by the simple causal structure $X\rightarrow Y$ and the (deterministic) structural equation reads $Y=X^3$.
Assume symmetric distributions with $\Exp[X]=\Exp[Y]=0$. Then we have  $\tilde{S}(x)=x/\sigma_X$ and $\tilde{S}(y)= y/\sigma_Y=x^3/\sigma_Y$. Hence,
\[
\frac{\tilde{S}(y)}{\tilde{S}(x)} = x^2 \frac{\sigma_X}{\sigma_Y} \to \infty \quad  \hbox{  for } x\to \infty.
\]
This shows that large outliers cause a considerably even stronger outlier downstream, but here this is the result of that particular definition of the outlier scores.

\paragraph{Example with IT outlier score}
Let $X$ and $Y$ be Gaussian variables and $X$ influence $Y$ via the linear structural equation
\[
Y = X + N,
\]
where $X \sim \cN(0,1)$ and $N$ is an independent noise with $N\sim \cN(0,1)$. 
Recall that the z-score now is an IT score up to monotonic reparameteriation.
Let us set $X$ manually to the value $x=2$, which
is already a strong perturbation (values deviating from the mean by  $2$ standard deviations  or more occur with probability less than $0.05$). 
Assume further that $N$ attains the value $1$, which results in $Y$ attaining $y=2+1=3$, which is a quite strong outlier
because $y/\sigma_Y= 3/\sqrt{\sigma_X^2+\sigma_N^2}=3/\sqrt{2}\approx 2.2$. Hence,  $y$ is a stronger outlier than $x$ in terms of Rarity and Log Quantiles.  From an intuitive perspective, however, $x$ contributed more to $y$ being an outlier than $y$ itself. 
After all, $Y$ did not behave in an unexpected way, given the high value of its parent $X$; noise values of the size of the standard deviation are not particularly unlikely ($P\{N\geq 1\} \approx 0.16$). 

There is, however, a sense in which 
an outlier is unlikely to cause a significantly stronger one with respect to an IT score:
\begin{Lemma}[relations between outlier scores]\label{lem:Scond}
For any $\Delta \in \R_0^*$ we have
\[
P \{ S(Y)\geq c+\Delta | S(X)\geq c \} \leq e^{-\Delta}.
\]
for almost all $c\in S(\cX)$.
\end{Lemma}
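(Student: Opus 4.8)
The plan is to write the conditional probability as a ratio and control its numerator and denominator separately, using only the two defining properties \eqref{eq:c} and \eqref{eq:cimage} of IT scores. Fix $c$ with $P\{S(X)\geq c\}>0$; then
\[
P\{S(Y)\geq c+\Delta \mid S(X)\geq c\}=\frac{P\{S(Y)\geq c+\Delta,\ S(X)\geq c\}}{P\{S(X)\geq c\}},
\]
so it suffices to bound the numerator from above and to evaluate the denominator exactly.

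For the numerator I would simply discard the conditioning event: since $\{S(Y)\geq c+\Delta,\ S(X)\geq c\}\subseteq\{S(Y)\geq c+\Delta\}$, the numerator is at most $P\{S(Y)\geq c+\Delta\}$, and property \eqref{eq:c} applied to the score of $Y$ gives $P\{S(Y)\geq c+\Delta\}\leq e^{-(c+\Delta)}$. Note that this step uses nothing about the joint law of $(X,Y)$, so it applies whether or not $Y$ is a descendant of $X$ --- the lemma is really a statement about any pair of marginally-scored variables.

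For the denominator, property \eqref{eq:cimage} says that for $P_X$-almost all $x\in\cX$ one has $P\{S(X)\geq S(x)\}=e^{-S(x)}$. Writing $c=S_X(x)$ and letting $x$ range over this full-measure set, we obtain $P\{S(X)\geq c\}=e^{-c}$ for almost all $c$ in the essential image $S(\cX)$ --- exactly the qualification ``for almost all $c\in S(\cX)$'' in the statement --- and in particular $P\{S(X)\geq c\}=e^{-c}>0$, so the conditional probability is well defined for these $c$. Combining the two bounds,
\[
P\{S(Y)\geq c+\Delta \mid S(X)\geq c\}\leq\frac{e^{-(c+\Delta)}}{e^{-c}}=e^{-\Delta},
\]
which is the claim.

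The only delicate point --- and hence the ``main obstacle,'' though a mild one --- is the measure-theoretic bookkeeping behind ``for almost all $c\in S(\cX)$'': one must read \eqref{eq:cimage} as a statement about the pushforward of $P_X$ under $S_X$, so that the $P_X$-null exceptional set of values $x$ translates into a null set of score values $c$. If $S_X$ happens to be surjective this step is unnecessary: by Lemma~\ref{lem:sur} the denominator equals $e^{-c}$ for \emph{every} $c\geq 0$, and the estimate above then holds for all such $c$. Everything else is a one-line application of \eqref{eq:c} and \eqref{eq:cimage}.
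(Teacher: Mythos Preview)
Your proof is correct and follows exactly the paper's approach: write the conditional probability as a ratio, bound the numerator via \eqref{eq:c} and evaluate the denominator via \eqref{eq:cimage}, then cancel $e^{-c}$. Your additional remarks on the measure-theoretic reading of ``almost all $c\in S(\cX)$'' and the surjective case are accurate elaborations not spelled out in the paper, but the core argument is identical.
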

\begin{proof} We have:
\begin{align*}
& P  \{ S(Y)\geq c+\Delta | S(X)\geq c \}  \\
= & P  \{ S(Y)\geq c+\Delta, \,S(X)\geq c \} / P\{S(X) \geq c \}  \\
 \leq & P\{ S(Y) \geq c +\Delta \} / P\{S(X) \geq c \}  \leq \frac{e^{-c-\Delta}}{e^{-c}},
\end{align*}
where we used that $P\{S(X) \geq c\}=e^{-c}$ holds for all $c\in S(\cX)$. 
\end{proof}
Note that Lemma~\ref{lem:Scond} holds independently of whether $X$ is the cause and $Y$ the effect or vice versa: Plugging an outlier $x$ with $S(x)\geq c$ into any causal mechanism
$P_{Y|X}$, is unlikely to cause an effect $y$ whose outlier score is significantly larger than $c$. On the other hand, given that the effect $x$ has outlier score
$c$ at least, it is unlikely that the cause $y$ plugged into the mechanism $P_{X|Y}$ had an outlier score significantly larger than $c$.

\section{Conditional outlier scores \label{sec:co}}

The previous section discussed to what extent the value attained by a node  
is unexpected {\it given the value of its parents}. The intuition behind this question implicitly referred to
 a notion of {\it conditional} outlier scoring, which we now introduce formally by using the the conditional distribution of $Y$, given its parents. Defining outlier scores by conditional distributions given some background information is certainly not novel \cite{Song2017}. The present paper, however,  emphasizes that
analyzing the {\it causal contribution} of each node to an outlier requires `causal conditionals', that is, conditionals of a node given its parents. If an anomaly is likely, given its parents, we would consider the anomaly as `caused by' the latter and not caused `by the node itself'. In contrast, whether the anomaly is likely {\it given its children}, is irrelevant for root cause analysis.

\subsection{Two variable case}

We first restrict the attention to a causal structure with just two variables $X,Y$ where $X$ is the cause of $Y$.
Generalizing this to the case where the target variable
is influenced by multiple causes will be straightforward since none of the results requires $X$ to be one-dimensional.  

\begin{Definition}[Conditional outlier score]\label{def:condscore}
Let $X,Y$ be random variables with range $\cX,\cY$, respectively, and causally linked by $X\rightarrow Y$. 
A measurable function $S_{Y|X} :\cY \times \cX\to \R^+_0$ is called conditional IT outlier score if for all $x\in \cX$,
$S_{Y|X} (\cdot,x)$ is an IT score with respect to the conditional distribution $P_{Y|X=x}$.
\end{Definition}
It will be convenient to write $S(y|x)$ instead of $S_{Y|X}(y,x)$.  The following property supports the 
view that conditional outliers quantify to what extent an unlikely event happened at the particular node under consideration and not at its parents:
\begin{Lemma}[independence of surjective scores] If $S_{Y|X}(\cdot,x)$ is a surjective IT score for every $x\in \cX$, then
$S_{Y|X}(Y,X) \independent X.$
\end{Lemma}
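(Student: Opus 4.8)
The plan is to reduce the independence claim to a statement about conditional distributions: I will show that the conditional law of the random variable $Z:=S_{Y|X}(Y,X)$ given $X=x$ is the \emph{same} for ($P_X$-almost) every $x$. Once that is established, independence follows from the elementary fact that a random variable whose regular conditional distribution given $X$ is $P_X$-a.s.\ a fixed probability measure is independent of $X$; indeed, if $P\{Z\in A\mid X=x\}=\mu(A)$ for a.e.\ $x$, then $P\{Z\in A,\,X\in B\}=\int_B \mu(A)\,dP_X=\mu(A)P_X(B)=P\{Z\in A\}P\{X\in B\}$ for all Borel $A,B$.

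First I would fix an arbitrary $x\in\cX$ and use the hypothesis that $S_{Y|X}(\cdot,x)$ is a \emph{surjective} IT outlier score with respect to $P_{Y|X=x}$. Applying Lemma~\ref{lem:sur} with the triple $(X,P_X,S)$ in that lemma instantiated as $(Y,\,P_{Y|X=x},\,S_{Y|X}(\cdot,x))$, I conclude that the push-forward of $P_{Y|X=x}$ under the map $y\mapsto S_{Y|X}(y,x)$ is the standard exponential distribution on $[0,\infty)$, i.e.\ the law with density $p(s)=e^{-s}$. The essential observation is that this limiting law carries no dependence on $x$ whatsoever.

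Next I would identify that push-forward with the regular conditional distribution of $Z=S_{Y|X}(Y,X)$ given $X=x$. On the event $\{X=x\}$ one has $Z=S_{Y|X}(Y,x)$, and the conditional law of $Y$ given $X=x$ is by definition $P_{Y|X=x}$; hence the conditional law of $Z$ given $X=x$ is exactly the image of $P_{Y|X=x}$ under $S_{Y|X}(\cdot,x)$, which by the previous paragraph equals the standard exponential law for every $x$. Therefore $P\{Z\in\cdot\mid X=x\}$ is $P_X$-a.s.\ constant in $x$, and by the reduction above $S_{Y|X}(Y,X)\independent X$.

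The two probabilistic computations involved are routine and are already packaged in Lemma~\ref{lem:sur}; the only step needing genuine care is the measure-theoretic bookkeeping in the third paragraph, namely that the regular conditional distribution of $Z$ given $X$ really does coincide, for $P_X$-a.e.\ $x$, with the image measure of the conditional $P_{Y|X=x}$ under $S_{Y|X}(\cdot,x)$. This is a standard disintegration argument, but it is the place where any implicit regularity assumptions on the spaces $\cX,\cY$ (e.g.\ that they are standard Borel so that regular conditional distributions exist) would be used. Granting that, the proof is complete.
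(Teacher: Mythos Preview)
Your proposal is correct and is exactly the argument the paper gives (in compressed form): apply Lemma~\ref{lem:sur} to conclude that for every $x$ the law of $S_{Y|X}(Y,x)$ under $P_{Y|X=x}$ is the standard exponential, hence independent of $x$, which yields $S_{Y|X}(Y,X)\independent X$. Your additional remarks on disintegration and the standard-Borel hypothesis are a welcome clarification of the measure-theoretic fine print the paper leaves implicit.
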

The lemma is an immediate consequence of Lemma~\ref{lem:sur}  since $S_{Y|X}(Y|_{X=x},x)$ has the same density $s \mapsto e^{-s}$ for all $x\in \cX$. 

We have emphasized that conditional outlier score is supposed to measure to what extent something unexpected happened at that particular node. This perspective is facilitated when a conditional outlier score is -- a priori -- defined by an outlier of the unobserved noise that corresponds to that node in an FCM: 
\begin{Definition}[FCM for conditional score] \label{def:fcm} A conditional IT outlier score $S_{Y|X}$ is said to have an FCM
if there is an FCM $Y=f(X,N)$ with range $\cN$ for $P_{Y|X}$ and an IT score $S_N:\cN\to \R_0^+$ such that
$N$ is a function of $(X,Y)$ and $S_{Y|X}(Y,X)=S_N(N)$.
\end{Definition}
\begin{Lemma}[existence of FCM for a conditional score]\label{lem:fcmscore}
A conditional IT outlier score $S_{Y|X}$ has an FCM if and only if 
\[
X \independent S_{Y|X}(Y|X). 
\]
\end{Lemma}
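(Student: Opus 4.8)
The plan is to prove the two implications separately; the ``only if'' direction is immediate and the ``if'' direction is a construction whose single delicate point is a Fubini argument.

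For ``only if'', suppose $S_{Y|X}$ has an FCM, i.e.\ $Y=f(X,N)$ is an FCM for $P_{Y|X}$ and $S_{Y|X}(Y,X)=S_N(N)$ for some IT score $S_N$ on $\cN$. The noise of an FCM for the structure $X\rightarrow Y$ is independent of $X$ (the noise variables are jointly independent and $X$ is a function of the others), so $S_{Y|X}(Y,X)=S_N(N)$ is a function of $N$ alone and therefore independent of $X$.

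For ``if'', assume $X\independent S_{Y|X}(Y,X)$ and put $N_1:=S_{Y|X}(Y,X)$. The crucial step I would carry out first is to show that the identity map on $\R_0^+$ is an IT score for the marginal law $P_{N_1}$, i.e.\ $P\{N_1\geq s\}\leq e^{-s}$ for every $s$ and $P\{N_1\geq s\}=e^{-s}$ for $P_{N_1}$-almost every $s$. For fixed $x$, write $G_x(s):=P_{Y|X=x}\{S_{Y|X}(Y',x)\geq s\}$; since $S_{Y|X}(\cdot,x)$ is an IT score for $P_{Y|X=x}$, \eqref{eq:c} gives $G_x(s)\leq e^{-s}$ for all $s$ and \eqref{eq:cimage} gives $G_x(s)=e^{-s}$ for almost every $s$ with respect to the pushforward of $P_{Y|X=x}$ under $S_{Y|X}(\cdot,x)$, which is the conditional law of $N_1$ given $X=x$. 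The assumption $X\independent N_1$ forces this conditional law to equal $P_{N_1}$ for $P_X$-almost every $x$, so $\{(x,s):G_x(s)\neq e^{-s}\}$ is $P_X\otimes P_{N_1}$-null; Fubini then gives, for $P_{N_1}$-almost every $s$, that $G_x(s)=e^{-s}$ for $P_X$-almost every $x$, whence $P\{N_1\geq s\}=\int G_x(s)\,dP_X(x)=e^{-s}$, while $P\{N_1\geq s\}\leq e^{-s}$ for every $s$ by integrating the bound $G_x(s)\leq e^{-s}$.

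It then remains to build a genuine noise variable out of $N_1$. Since $S_{Y|X}(\cdot,x)$ need not be injective, $Y$ is generally not a function of $(X,N_1)$, so I would set $N:=(N_1,V)$ with $V$ the conditional quantile transform of $Y$ given $(X,N_1)$. When that conditional law is non-atomic, $V$ is uniform on $[0,1]$ and independent of $(X,N_1)$, so $N$ is independent of $X$ and is a measurable function of $(X,Y)$; taking $f(x,(n_1,v))$ to be the value at quantile $v$ of the conditional law of $Y$ given $X=x,\,N_1=n_1$ makes $Y=f(X,N)$ an FCM for $P_{Y|X}$, and $S_N(n_1,v):=n_1$ is an IT score for $P_N$ by the previous step, with $S_N(N)=N_1=S_{Y|X}(Y,X)$ as required. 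I expect the main obstacle to lie in this packaging step rather than in the idea: one needs joint measurability of the family of conditional quantile transforms (routine for standard Borel spaces), some care with the $\geq$-versus-$>$ distinction implicit in \eqref{eq:cimage}, and, when the conditional law of $Y$ given $(X,N_1)$ has atoms, an auxiliary independent uniform adjoined to $V$ (or the hypothesis that such conditionals are non-atomic). The conceptual heart is the Fubini argument, which shows that the independence assumption is exactly what makes the identity an admissible IT score for the noise.
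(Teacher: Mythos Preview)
Your argument is correct and, at its core, the same as the paper's: build the noise as a pair whose second component is $S_{Y|X}(Y,X)$, use the independence assumption to verify that projection onto this component is an IT score for the noise, and supply an extra component so that $Y$ becomes a function of $(X,N)$. Your verification that $s\mapsto s$ is an IT score for $P_{N_1}$ via Fubini is exactly what the paper does when it argues that the distribution of $S(Y|X)$ agrees with the distribution of $S(Y|x)$ for $P_X$-almost every $x$.

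The only genuine difference is how the ``extra'' component is produced. The paper starts from an arbitrary FCM $Y=f(X,N)$ for $P_{Y|X}$ and couples a copy $N_1$ of $N$ to $N_2:=S(Y|X)$ through $P_{N\mid S(Y|X)}$, invoking weak contraction to get $(N_1,N_2)\independent X$. You instead construct the extra component directly as the conditional quantile transform of $Y$ given $(X,N_1)$. Your route is more self-contained and makes the requirement ``$N$ is a function of $(X,Y)$'' from Definition~\ref{def:fcm} transparent in the non-atomic case, at the cost of the auxiliary-randomization caveat you already flag for atoms; the paper's coupling argument is shorter but leaves the ``$N$ is a function of $(X,Y)$'' point less explicit. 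Conceptually the two constructions are interchangeable realizations of the same idea.
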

The proof is in the appendix. 

\subsection{Causal Bayesian Network}

Since Definition~\ref{def:condscore} did not put any restriction on the range $\cX$,  we can thus assume $X$ to be the multivariate variable that consists of all parents $PA_j$ of a variable $X_j$ in a Bayesian network
and obtain, in a canonical way, the conditional score
\[
S_{X_j|PA_j}: \cX_j \times \cPA_j  \rightarrow \R_0^+,
\]
where $\cPA_j$ denotes the range of $PA_j$.

Whenever the outlier scores have an FCM in the sense of Definition~\ref{def:fcm}, they are independent
random variables. To decide whether an observed $n$-tuple $\bx=(x_1,\dots,x_n)$ defines an outlier event, we
can then again use the notion of convolution:

\begin{Definition}[convolution of cond.~scores] 
Given conditional IT outlier scores $S_{X_1|PA_1},\dots, S_{X_n|PA_n}$ for some Bayesian network that are independent as random variables, let
$f$ be given by
\[
f(x_1,\dots,x_n):= \sum_{j=1}^n S(x_j|pa_j).
\]
Then the convolution score is defined via
\[
(S_{X_1|PA_1} * \cdots * S_{X_n|PA_n})(\bx) := S^f(\bx).
\] 
\end{Definition}
By straightforward adaption of the proof of Lemma~\ref{thm:conv} we obtain for this case:
\begin{Theorem}[convolution is subadditive]\label{thm:condconv}
Whenever  $S_{X_1|PA_1},\dots,S_{X_n|PA_n}$ are  independent random variables having the density $p(n)=e^{-n}$, we have
\begin{align*}
& (S_{X_1|PA_1} * \cdots * S_{X_n|PA_n})(\bx)  \\
= & \sum_{j=1}^n S(x_j|pa_j) - \log \sum_{i=0}^{n-1}   \frac{(\sum_{j=1}^n S(x_j|pa_j))^i}{i!}.
\end{align*}
\end{Theorem}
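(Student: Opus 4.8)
The plan is to reduce the statement to computing the survival function of a sum of independent unit-exponential random variables, exactly as in the proof of Theorem~\ref{thm:conv}. Write $T := f(\bX) = \sum_{j=1}^n S_{X_j|PA_j}(X_j,PA_j)$ and $t := f(\bx) = \sum_{j=1}^n S(x_j|pa_j)$. By hypothesis the summands $S_{X_j|PA_j}(X_j,PA_j)$ are independent and each has density $s\mapsto e^{-s}$ on $[0,\infty)$, i.e.\ each is exponentially distributed with rate $1$; consequently $T$ is Gamma-distributed with integer shape $n$ and rate $1$.

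First I would establish, by induction on $n$, the tail formula
\[
P\{T \geq t\} = e^{-t}\sum_{i=0}^{n-1}\frac{t^i}{i!}, \qquad t\geq 0 .
\]
The base case $n=1$ is just $P\{S_1 \geq t\}=e^{-t}$. For the inductive step I would write $T_{n} = T_{n-1}+S_n$ with $T_{n-1}\independent S_n$, use the convolution integral to get the Gamma$(n,1)$ density $t^{n-1}e^{-t}/(n-1)!$ of $T_n$, and integrate the tail; equivalently, one simply differentiates the claimed right-hand side above and checks that it equals the negative of the Gamma$(n,1)$ density, with the correct value $1$ at $t=0$. Either way this is a routine integration-by-parts argument and is where essentially all the (modest) work lies.

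Given the tail formula the rest is immediate. By the definition of the convolution of conditional scores, $(S_{X_1|PA_1}*\cdots*S_{X_n|PA_n})(\bx) = S^f(\bx) = -\log P\{f(\bX)\geq f(\bx)\} = -\log P\{T\geq t\}$. Substituting the tail and using $-\log(e^{-t}A) = t - \log A$ gives
\[
-\log\left(e^{-t}\sum_{i=0}^{n-1}\frac{t^i}{i!}\right) = t - \log\sum_{i=0}^{n-1}\frac{t^i}{i!},
\]
and re-expanding $t=\sum_{j=1}^n S(x_j|pa_j)$ yields exactly the claimed identity.

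The only point requiring a little care — but no real difficulty — is that since each $S_{X_j|PA_j}(X_j,PA_j)$ is continuously distributed, $T$ has a density and hence no atoms, so $P\{T\geq t\}=P\{T>t\}$ and the definition of $S^f$ via $\{f(\bX)\geq f(\bx)\}$ involves no measure-zero ambiguity; this is the feature that lets one treat the independent conditional scores as bona fide surjective IT scores, as in Theorem~\ref{thm:conv}. Because the argument never uses any structure of the conditioning variables $PA_j$ beyond the stipulated marginal law $e^{-n}$ of each conditional score, it transfers verbatim from the unconditional case — which is precisely why the result follows by straightforward adaptation of the proof of Theorem~\ref{thm:conv}.
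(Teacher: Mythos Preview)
Your proposal is correct and arrives at exactly the same key identity, the Erlang tail
\[
P\{T\geq t\}=e^{-t}\sum_{i=0}^{n-1}\frac{t^i}{i!},
\]
from which the result follows by taking $-\log$. The route you take to this tail formula, however, differs from the paper's. The paper (in the proof of Theorem~\ref{thm:conv}, to which Theorem~\ref{thm:condconv} is explicitly reduced) computes the multivariate integral $\int_{\sum s_j\geq s} e^{-\sum s_j}\,ds_1\cdots ds_n$ directly: a linear change of variables $(s_1,\dots,s_n)\mapsto(s_1,\dots,s_{n-1},s')$ with $s'=\sum_j s_j$ turns the inner integral into the volume $(s')^{n-1}/(n-1)!$ of an $(n{-}1)$-simplex, and the remaining one-dimensional integral $\int_s^\infty e^{-s'}(s')^{n-1}/(n-1)!\,ds'$ is then evaluated via a tabulated formula. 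You instead identify $T$ as Gamma$(n,1)$ and obtain the tail by induction (equivalently, by checking that the derivative of the claimed expression is the negative Gamma density). Both computations are standard and equally short; yours is the textbook probabilistic derivation, the paper's is a direct calculus computation that avoids naming the Gamma distribution. Your closing remark---that nothing beyond the $e^{-s}$ marginal law of each conditional score is used, so the unconditional argument transfers verbatim---is exactly the observation the paper makes when it says the result follows by ``straightforward adaption'' of the earlier proof.
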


\section{Attributing outliers of a target variable to its  ancestors \label{sec:attr}}

In applications one may often be interested in outliers
of one target variable, say $X_n$, and wants to attribute this outlier to its ancestor nodes, or more precisely,
to quantify to what extent each node {\it contributed} to the observed outlier  $x_n$.
Assume, without loss of generality, that $X_n$ is a sink node, that is, that it has no descendants and that we are given an FCM of all the variables
$X_1,\dots,X_n$ with noise variables $\bN =(N_1,\dots,N_n)$ with outlier scores $S_{X_j|PA_j}$ given by $S_{N_j}$ for $N_j$.
By recursively applying \eqref{eq:fcm} we can write $X_n$ as deterministic function
of all noise variables.
Therefore,
\begin{eqnarray}
- \log P\{ S(X_n) \geq S(x_n) \}  &= & S(x_n) \nonumber\\
\log P\{  S(X_n) \geq S(x_n) | \bN=\bn \} &=& 0.  \label{eq:logscore}
\end{eqnarray}
The first equality is due to Definition~\ref{def:score} and the second because $X_n$ is a deterministic  function of $\bN$.
We can thus change  \eqref{eq:logscore} step by step from $0$ to $S(x_n)$ by dropping more and more
of the components $N_j$ from the vector $\bN$. It is tempting to quantify the contribution of each node $j$ by the change caused by dropping $N_j$, but this value depends on the ordering of nodes. Fortunately, there is the concept of
Shapley values from cooperative game theory that solves the problem of order-dependence by averaging over all possible orderings in which elements are excluded \cite{Shapley1953}.\footnote{More recently, Shapley values have also been used in the context of quantifying feature relevance \cite{Datta2016}.} We rephrase this concept for our context. First, we define the contribution of $j$, given some
set $T\subset \{1,\dots,n\}=:U$ not containing $j$, by
\begin{eqnarray*}
c_\bx(j|T) &:=& \log P\{  S(X_n) \geq S(x_n) | \bn_{T\cup \{j\}} \}  \\
   &-& \log P\{  S(X_n) \geq S(x_n) | \bn_T \}.  
\end{eqnarray*}
We then define the Shapley contribution of $j$ by
\begin{equation}\label{eq:shapley}
c^{Sh}_\bx(j):=  \sum_{T\subset U\setminus \{j\}}  \frac{1}{n {n-1 \choose |T|}}   c_\bx (j | T).
\end{equation}
The following result is a straightforward application of Shapley's idea and follows because
$c_\bx^{Sh}$ is just defined by symmetrization over all possible orderings.
\begin{Theorem}[decomposition of target outlier score]\label{thm:attr}
The outlier score of any variable decomposes into the contribution of each of its ancestors:
\[
S(x_n) = \sum_{j=1}^n c^{Sh}_\bx(j). 
\]
 \end{Theorem}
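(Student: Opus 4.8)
The plan is to recognize the weight pattern in \eqref{eq:shapley} as the Shapley value of the cooperative game on the player set $U=\{1,\dots,n\}$ whose characteristic function is $v(T):=\log P\{\,S(X_n)\ge S(x_n)\mid \bn_T\,\}$ for $T\subseteq U$ (with the convention that conditioning on $\bn_\emptyset$ means no conditioning), and then to invoke the efficiency property of Shapley values, $\sum_{j=1}^n c^{Sh}_\bx(j)=v(U)-v(\emptyset)$. Note that, directly from the definition of $c_\bx(j\mid T)$, one has $c_\bx(j\mid T)=v(T\cup\{j\})-v(T)$, so \eqref{eq:shapley} is literally the Shapley-value formula associated with $v$.

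First I would compute the two boundary values of $v$. By Definition~\ref{def:score} (equivalently the first line of \eqref{eq:logscore}), $v(\emptyset)=\log P\{S(X_n)\ge S(x_n)\}=-S(x_n)$. On the other hand, recursively unfolding \eqref{eq:fcm} expresses $X_n$ as a deterministic function of the full noise vector $\bN$; hence once $\bN$ is fixed to $\bn$ the value $S(X_n)$ is determined and the event $\{S(X_n)\ge S(x_n)\}$ has conditional probability $1$, i.e.\ $v(U)=\log P\{S(X_n)\ge S(x_n)\mid \bN=\bn\}=0$, which is the second line of \eqref{eq:logscore}. Thus $v(U)-v(\emptyset)=S(x_n)$.

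Second I would establish efficiency by the standard telescoping-over-orderings argument. For a permutation $\pi$ of $U$ let $P_j^\pi$ be the set of predecessors of $j$ in $\pi$; then $\sum_{j=1}^n\big(v(P_j^\pi\cup\{j\})-v(P_j^\pi)\big)$ telescopes to $v(U)-v(\emptyset)$ for \emph{every} $\pi$. Averaging this identity over all $n!$ permutations and grouping the permutations according to the predecessor set $T=P_j^\pi$ of $j$ (there are exactly $|T|!\,(n-1-|T|)!$ such permutations) reproduces the weight $|T|!\,(n-1-|T|)!/n!=1/\big(n\binom{n-1}{|T|}\big)$ attached to $c_\bx(j\mid T)$ in \eqref{eq:shapley}. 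Hence $\sum_{j=1}^n c^{Sh}_\bx(j)=v(U)-v(\emptyset)=S(x_n)$, which is the claim of Theorem~\ref{thm:attr}.

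The only point requiring care is measure-theoretic bookkeeping rather than any genuine difficulty: the quantities $P\{\cdot\mid\bn_T\}$ must be read as fixed versions of the conditional probability given the sub-vector $\bN_T$, evaluated at the observed realization, and the identities collected above — in particular $v(\emptyset)=-S(x_n)$ and $v(U)=0$ — hold for $P_{\bN}$-almost every realization of the noise, which is exactly the level of generality in which the statement is intended. Beyond that, the result is, as remarked just before the statement, a direct instance of Shapley's construction, so no further ideas are needed.
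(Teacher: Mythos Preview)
Your proposal is correct and matches the paper's approach: the paper simply remarks that the result ``follows because $c_\bx^{Sh}$ is just defined by symmetrization over all possible orderings,'' which is precisely the efficiency-via-telescoping argument you spell out. If anything, you have supplied more detail than the paper itself, which treats the theorem as an immediate consequence of Shapley's construction together with the two boundary values in \eqref{eq:logscore}.
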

 Note that this contribution can be negative which makes perfectly sense:
one value being extreme can certainly weaken the outlier of the target, that is, a more common value at that node would have made the outlier even stronger.

\section{Experiments\label{sec:exp}}
The goal of the following experiments were to explore whether the concept of {\it conditional} outliers is better at finding the root cause
of outliers than the naive approach of comparing unconditional outlier scores.  The difference between IT scores and other scores
did not play a role for these experiments because we work with simple Gaussian probabilistic models to keep the focus on the difference between
conditional versus unconditional score. The main purpose of Sections~\ref{sec:it} and \ref{sec:weak} was to provide a solid basis for
outlier scoring in the first place before discussing the difference between conditional versus unconditional scores.

\subsection*{Synthetic Datasets}

Here we injected perturbations to randomly chosen nodes in random DAGs. Each node had a $15\%$ chance to be perturbed  by a change of the structural equation from
\begin{align}
	X & = f(PA_X) + N_X \label{eq:anm}\\
	\hbox{ to } \quad \quad X &= f(PA_X) + N_X + \lambda \sigma_{N_X} \label{eq:anmchanged},
	\end{align}
where $\lambda$ is a parameter which we refer to as the perturbation strength from here onwards, and $\sigma_{N_X}$ is the standard deviation of the noise $N_X$ (by restricting the attention to additive noise models \cite{Hoyer} in
\eqref{eq:anm} $\sigma_X$ defines a natural scale to define perturbation strength). 
We have considered the node(s) at which the perturbation was injected as the `root cause(s)' of the outlier. 

We have generated DAGs with $18$ nodes (to guarantee sufficient sparsity we have decided to chose $3$ parentless nodes and $15$ nodes for which the number $n$ of parents is randomly chosen with probabilities decaying inverse proportional to $n$).  The function $f$ is chosen as follows. 
		(1) With 20\% chance, linear with random coefficients in $\mathcal{U}(-1,1)$ and no intercept.
(2) With 80\% chance, a random non-linear function generated by a neural network with one hidden layer, random weights in $\mathcal{U}(-5, 5)$, a random number of neurons in $[2, 3, \dotsc, 100]$, and the sigmoid activation function.
This way, we guarantee to have some linear and, in practice more common, complex non-linear relationships.
The noise $N_X$ is chosen Gaussian or uniformly distributed with randomly chosen width.

\begin{figure}[!htbp]
	\centering
	\includegraphics[width=\columnwidth]{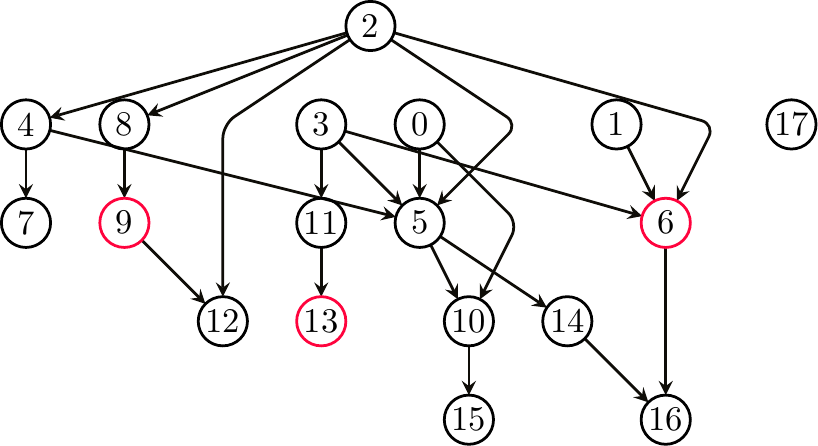}
		\caption{A randomly generated synthetic causal graph with root causes marked in red.}
	\label{fig:causal-graph}
\end{figure}

For each node in the causal graph, we sampled 2000 observations. 
We inferred a node to be perturbed whenever its unconditional score (this defined the baseline)   or its conditional score (our proposal) exceeded some threshold. 
We used z-score \eqref{eq:z} as unconditional score and
\begin{equation}\label{eq:condz}
\tilde{S}(x_j|pa_j):=  \frac{|x_j - \Exp[X_j|pa_j]|}{\sigma_{X_j|pa_j}},
\end{equation}
as conditional score,
where the conditional expectation $\Exp[X_j|pa_j]$ was estimated via a simple linear regression model. 
Again, this is a valid conditional IT outlier score with respect to a linear Gaussian probabilistic model. The value
$x_j - \Exp[X_j|pa_j]$ is then just the value of the Gaussian noise and the conditional standard deviation
$\sigma_{X_j|pa_j}$ is the standard deviation of the noise.
To avoid estimating cumulative tail distributions (which requires large sampling) we applied the simple z-score
also to our non-Gaussian variables, also to
 test the robustness of our method with respect to violations of distributional assumptions. 

First we examined how perturbation strength $\lambda$ affects the performance of outlier scores for a fixed random DAG shown in Figure~\ref{fig:causal-graph}. The nodes marked in red are the root causes of outliers, that is, the perturbed nodes. We estimate the functional relationship between a node and its parents by a random forest with 100 trees and a maximum depth of 7 on the training set. In the test set, we compare ROC curves of both scores at various values of perturbation strength. The results are shown in  Figure~\ref{fig:roc-curves}. 

\begin{figure*}[!htbp]
	\centering
	\begin{minipage}[t]{0.23\textwidth}
		\centering
		\includegraphics{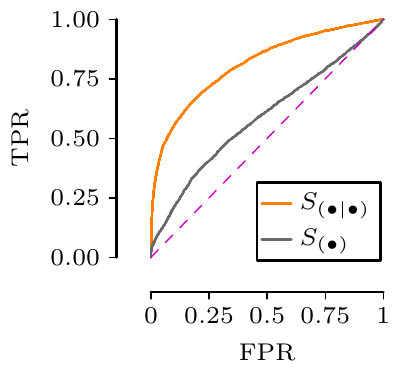}
	\caption*{$\lambda=2$}
	\end{minipage}
	\hspace{7mm}
	\begin{minipage}[t]{0.23\textwidth}
		\centering
		\includegraphics{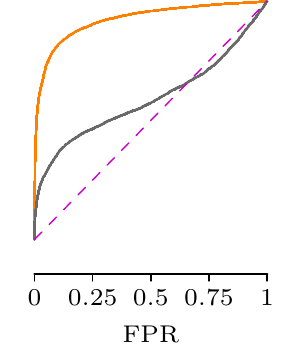}
		\caption*{$\lambda=3$}
	\end{minipage}
	\hfill
	\begin{minipage}[t]{0.23\textwidth}
		\centering
		\includegraphics{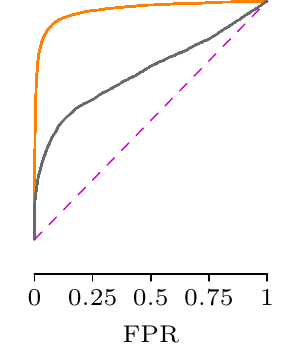}
		\caption*{$\lambda=4$}
	\end{minipage}
	\hfill
	\begin{minipage}[t]{0.23\textwidth}
		\centering
		\includegraphics{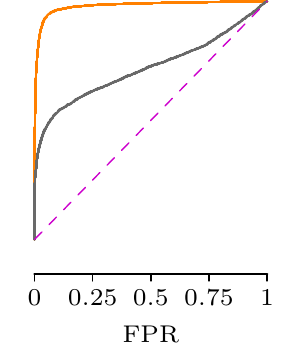}
		\caption*{$\lambda=5$}
	\end{minipage}
	\caption{ROC curves for various perturbation strengths ($\lambda$) on the root causes of the graph in Figure.~\ref{fig:causal-graph} for conditional (orange) and unconditional (black) outlier score.}
	\label{fig:roc-curves}
\end{figure*}

Already at $\lambda=2$, we observe that conditional outlier score is much better at identifying root causes than its unconditional counterpart. Unlike unconditional outlier score, the true positive rate (TPR) of conditional outlier score shoots up quickly with a slight increase in false positive rate (FPR). As the perturbation strength increases, the conditional outlier score identifies most of the root causes (true positives) only at a rather small cost of allowing very few false positives, in contrast to unconditional outlier score.

	\begin{figure*}[h]
		\centering
		\begin{minipage}[c]{0.2\textwidth}
			\includegraphics[width=\columnwidth]{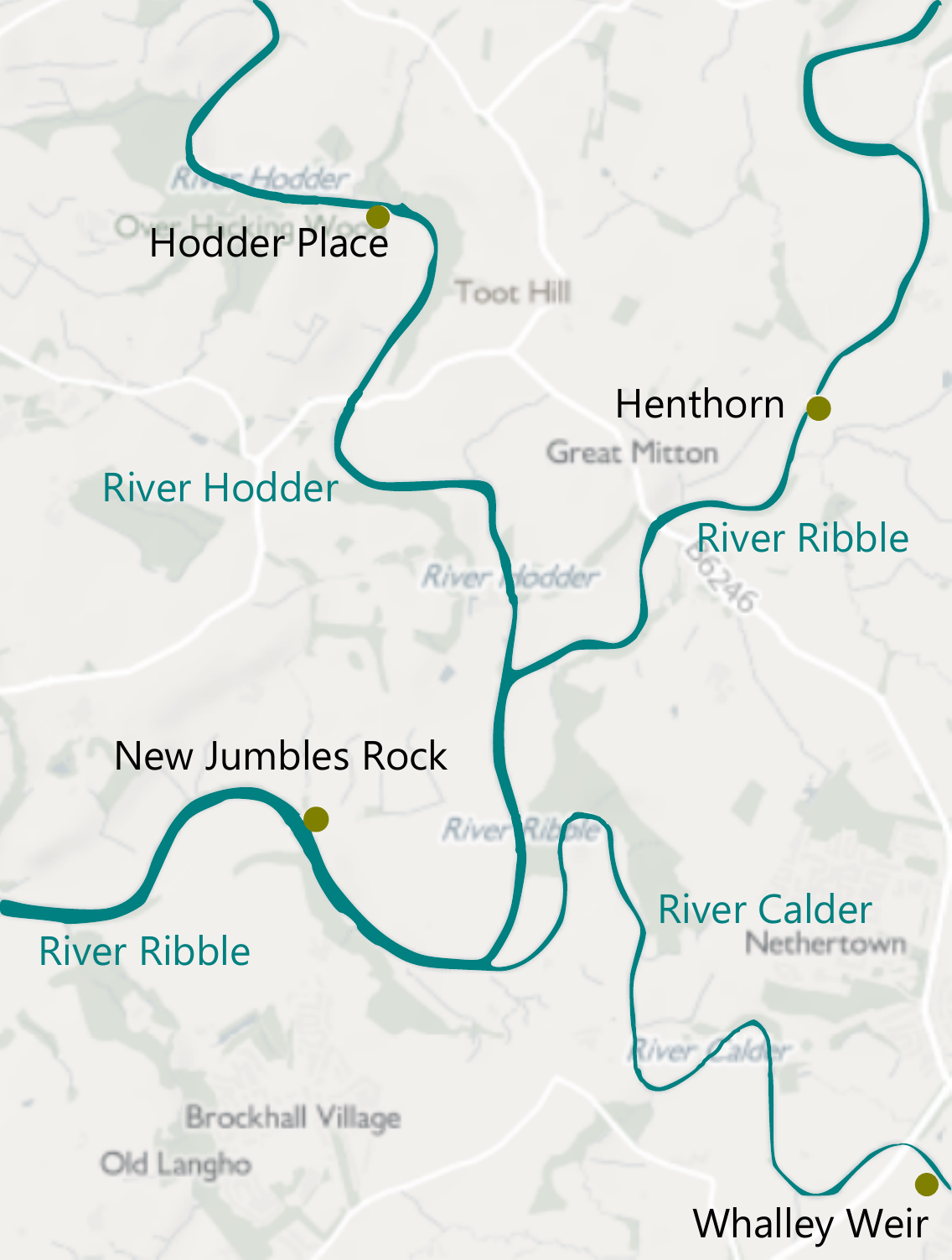}\\
		\end{minipage}
		\hfill
		\begin{minipage}[c]{0.78\textwidth}
			\includegraphics[width=\columnwidth]{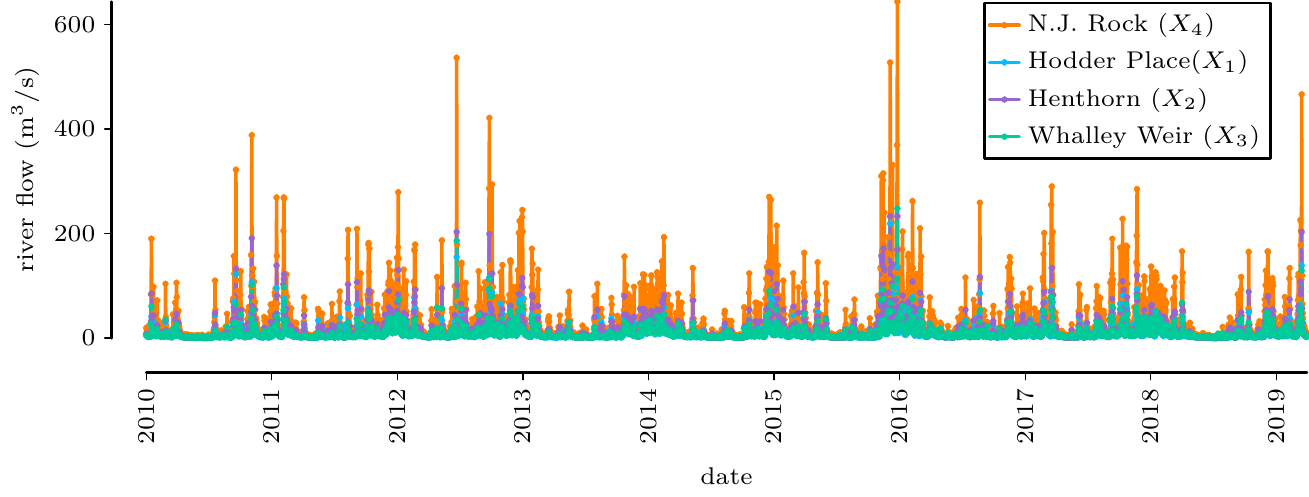}
				\end{minipage}
		\caption{(left) Map of measurement stations located before and after the confluence of rivers Hodder, Calder, and Ribble. (right) River flows from 1 Jan. 2010 till 31 Mar. 2019 at those measurement stations.}
		\label{fig:river-map-and-flow}
	\end{figure*}

\begin{figure}[!htbp]
	\centering
	\includegraphics[width=\columnwidth]{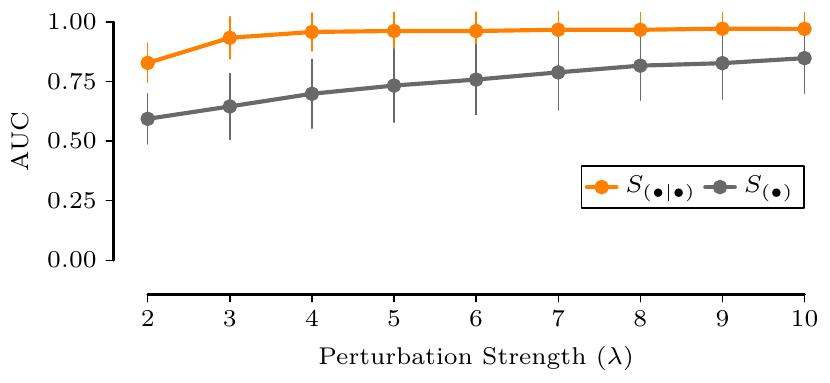}
	\caption{Area under ROC curve at various $\lambda$.}
	\label{fig:auc}
\end{figure}

We sampled 190 causal graphs, and report average area under ROC curve (AUC) together with the sample standard deviation at various values of perturbation strength in Figure~\ref{fig:auc}. We see that the results corroborate the findings from our previous experiment. The performance of both scores improve with the increase in the perturbation strength. The conditional outlier score, however, consistently performs better than unconditional outlier score.

\subsection*{Real Dataset} 
Here we considered daily river flow, measured in cubic metres per second, at various locations along three rivers in England.\!\footnote{\url{https://environment.data.gov.uk/hydrology/explore}} In particular, we consider the daily river flow at the New Jumbles Rock ($X_4$) measurement station, which is located right after the confluence of rivers Hodder, Calder, and Ribble. Furthermore, we consider three other measurement stations upstream of the confluence: Hodder Place ($X_1$) along River Hodder, and Henthorn ($X_2$) along River Ribble, and Whalley Weir ($X_3$) along River Calder (see Figure~\ref{fig:river-map-and-flow}). As river flow downstream of the confluence is the result of river flows upstream, we can reasonably assume the following causal graph\!\footnote{Note, however, that downstream water levels can in principle also influence upstream ones by slowing down the flow.}, up to unobserved common causes like weather conditions (e.g. precipitation and temperature), which we haven't accounted for since this required deeper domain knowledge like time delay of these influences.

	\begin{figure}[H]
		\centering
		\includegraphics[width=0.35\columnwidth]{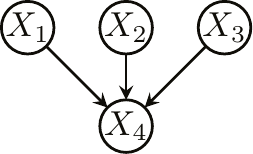}
			\end{figure}

	\begin{figure}[h]
		\centering
		\includegraphics[width=0.9\columnwidth]{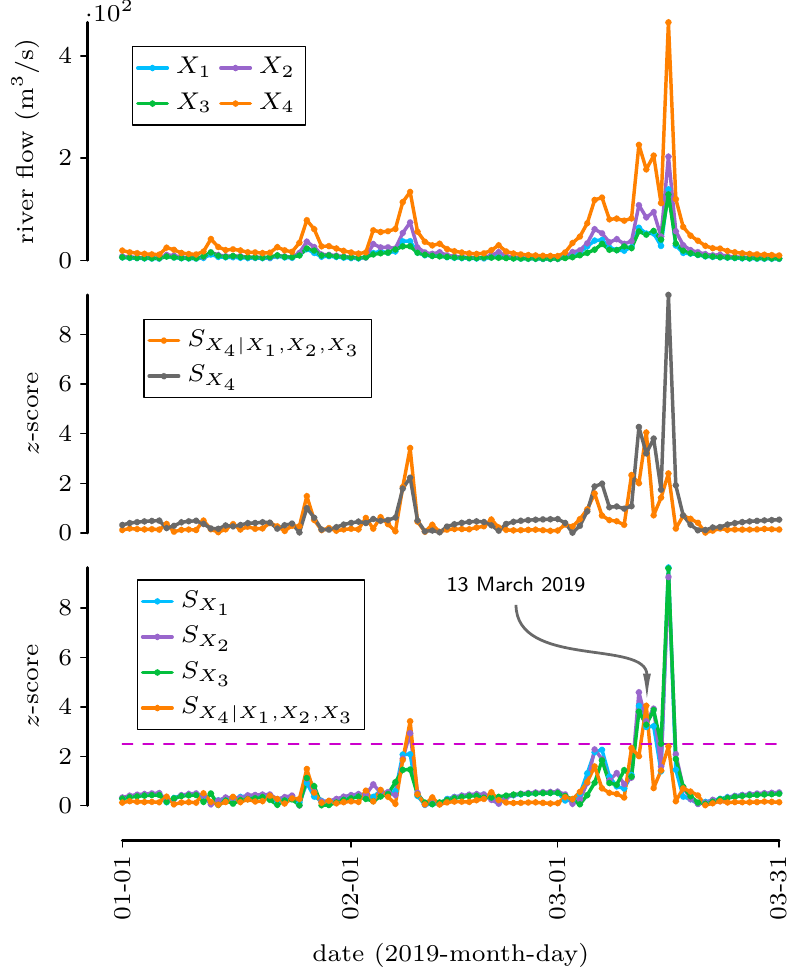}
		\caption{In the test set, we show (top) river flows at measurement stations, (middle)  conditional vs. unconditional outlier scores at the New Jumbles Rock station ($X_4$), and (bottom) conditional outlier scores at all measurement stations. The dashed magenta line represents the $z\!=\!2.5$ threshold.}
		\label{fig:test-outlier-scores}
	\end{figure}

	For each measurement station, we take daily river flows, measured at 9:00 a.m.~every day, from 1 January 2010 till 31 March 2019 (shown in Figure~\ref{fig:river-map-and-flow}). If a daily measurement is missing in one of the stations, we remove the corresponding daily measurements from all other stations. Consequently we end up with 3357 measurements. We use 3267 measurements before 1 January 2019 for fitting the structural equations, and the remaining 90 measurements, from the year 2019, for testing.

	Assuming that the underlying data generation process follows a linear additive noise model, we can represent the causal graph above with the structural equations $X_j=N_j$ for $j=1,\dots,3$ and
	\begin{align*}
		X_4 & = \alpha X_1 + \beta X_2 + \gamma X_3 + N_4 \; ,
	\end{align*}
	where $N_i$ is the noise variable corresponding to the variable $X_i$. 
	For the z-scores, we again estimated means and variances of variables $X_1$, $X_2$, and $X_3$. For $X_4$, we apply ordinary least squares linear regression, and again estimated the mean and the variance of the resulting noise
	and estimated condional z-score \eqref{eq:condz}.
		As histograms of noise variables suggest a roughly Gaussian shape, it again suffices to use a simplified outlier score such as the normalized distance from the mean. 	
	First we compare conditional against unconditional outlier scores on the test set at the New Jumbles Rock measurement station ($X_4$) in Figure~\ref{fig:test-outlier-scores} (middle).\!\footnote{Note that conditional and unconditional outlier scores are the same for $X_1$, $X_2$, and $X_3$ because they do not have any parents in the causal graph.} We observe that unconditional outlier score is not only often larger but also more sensitive to changes than its conditional counterpart. Consider the two scores on 16 March 2019, for instance. On this day, river flow is the largest at the New Jumbles Rock station. This is not surprising, however, given the high volume of water upstream. The conditional outlier score takes that into account, and is hence a mere $2.39$. The unconditional outlier score, on the other hand, is a massive $9.58$.

	Next we investigate the root cause of outlying measurements. To this end, in Figure~\ref{fig:test-outlier-scores}, we plot conditional outlier scores of measurements on the test set for all nodes in the causal graph. We examine one day in particular here. On 13 March 2019, for instance, we observe that conditional outlier scores of measurements at all the nodes are very high. This suggests us that something unlikely happened in all those nodes on that day before 9:00. One possible explanation is the heavy rainfall early morning in Lancashire county where the rivers are located in.
	
	Overall the experiments show that conditional outlier scores may differ significantly from unconditional ones. Consequently, even simple linear regression models sometimes suffice to {\em explain} outliers as resulting from upstream outliers. 
After all, every event for which large conditional outlier score come with small conditional scores show that the FCM fits well for that particular event. Also our simulation experiments suggest that values that have low conditional outlier score but high unconditional score are just downstream effects.	
Of course we have no ground truth regarding the `root causes' -- it is not even clear how to define it without the concept of conditional outliers
introduced here.

\section{Discussion}
We have proposed a way to quantify to what extent an observation is an unlikely event {\it given the value of its causes}
in a scenario where the causal structure is known. 
This way, we can attribute rare events to mechanisms associated with specific nodes 
in a causal network.
 This attribution is formally provided by two different main results. First, Theorem~\ref{thm:condconv} relates the outlier score
of a collective event $\bx$ to conditional outlier scores of each nodes. Second, Theorem~\ref{thm:attr} quantitatively attributes the outlier score of one target variable to unexpected behavior of its ancestors. Our results do not solve the hard problem of causal discovery \cite{Spirtes1993}, that is, the question of how to obtain the causal DAG in the first place. Instead, it aims at providing a framework to formally talk about attributing
rare events to root causes when the causal DAG is given and the structural equations are either given or inferred from data .\footnote{Structural equations do not uniquely follow from observed joint distributions even when the DAG is given \cite{Pearl:00}, but they can be inferred subject to appropriate assumptions, e.g., additive noise \cite{Hoyer}.}  

To avoid possible misunderstandings, we emphasize that anomalies may certainly occur because 
the causal DAG, the structural equations, or the corresponding noise distributions do not hold for that  
particular statistical realization. The fact that our computation of conditional outlier score is based on
assuming a fixed DAG, structural equation, and noise distribution does not mean that we exclude these changes.
The conditional outlier score should rather be considered as the negative logarithm of a p-value that can be used for {\it rejecting} the hypothesis that the given structural assumptions still hold for the given anomalous event.

{\bf Acknowledgements:} Thanks to Atalanti Mastakouri for comments on an earlier version of the manuscript.

\section{Appendix}

\subsection{Proof of Theorem~\ref{thm:conv}}

For surjective outlier scores we can without loss of generality assume $\cX_j=[0,\infty)$ with the density 
$p_j(x_j)= e^{-x_j}$, see Lemma~\ref{lem:sur}. Set $s:=  \sum_{j=1}^n S_j(x_j)$. 
The density on $\cX$ as the positive cone of $\R^n$ is then given by
\[
p(s_1,\dots,s_n) =  e^{-\sum_{j=1}^n s_j}.
\]
We have to integrate this density over the simplex $s_j\geq 0$ with $\sum_j s_j \leq s$ and thus obtain
\begin{align}
& P \left\{\sum_{j=1}^n S_j(X_j) \geq s \right\} \nonumber \\
&= \int_{\sum_{j=1}^n s_j \geq s}  e^{-\sum_{j=1}^n s_j } ds_1 \cdots ds_n.\label{eq:integral}
\end{align}
We now perform the substitution $(s_1,\dots, s_n) \mapsto (s_1,\dots,s_{n-1},s')$ with 
$s':=\sum_{j=1}^n s_j$ with Jacobian determinant $1$ and obtain for \eqref{eq:integral}:
\begin{align*}
\int_s^\infty e^{-s'}  \int_{\sum_{j=1}^{n-1} s_j \leq s'} ds_1\cdots ds_{n-1}    ds'
\end{align*}
The inner integral is just the volume of the simplex in $\R^{n-1}$ given by $s_j \geq 0$ and $\sum_{j=1}^{n-1} s_j \leq s'$, which reads $(s')^{n-1}/(n-1)!$.
We thus obtain
\begin{align*}
& P \left\{\sum_{j=1}^n S_j(X_j) \geq s\right\} 
= \int_s^\infty  e^{-s'} \frac{(s')^{n-1}}{(n-1)!} ds'.
\end{align*}
 Using  \cite{Bronshtein1973}
\begin{eqnarray*}
\int y^n e^{cy} dy &=& e^{cy} \sum_{j=0}^k (-1)^{k-j} \frac{k!}{j! c^{k-j+1} } y^j, 
\end{eqnarray*}
with $y=s'$, $c=-1$, and $k=n-1$ we obtain
\begin{align*}
& P \left\{\sum_{j=1}^n S_j(X_j) \geq s \right\}\\
&=  \left.  e^{-s'}  \sum_{j=0}^{n-1} (-1)^{n-1-j} \frac{1}{j! (-1)^{n-j} } (s')^j \right|_s^\infty\\
&= e^{-s} \sum_{j=0}^{n-1}   \frac{s^j}{j!}. \\   
\end{align*}
Inserting $\sum_{j=1}^n S_j(x_j)$ and taking the logarithm of the probability completes the proof.

\subsection{Proof of Lemma~\ref{lem:fcmscore}}

The `only if' part is obvious since $N$ is by definition of FCMs independent of $X$. To show the converse direction,
let $Y=f(X,N)$ be an FCM for $P_{Y|X}$ for some noise variable $N$. 
 Define a modified FCM 
 \begin{equation}\label{eq:mfcm}
  Y=\tilde{f}(X,\tilde{N}),
 \end{equation}
 with noise variable $\tilde{N}:= (N_1,N_2)$, where $N_1$ has the same range as $N$ and $N_2:=S(Y|X)$.
 Define an outlier score for $\tilde{N}$ by
 \[
 S_{\tilde{N}}(\tilde{N}):= N_2.
 \]
 To show that it satisfies the distributional requirements of an IT score we  need to show $P_{X,Y}\{ S(Y|X) \geq S(y|x) \} = e^{-S(y|x)} $ for $P_{X,Y}$-almost all
 $(x,y)$. By assumption we have $P_{Y}\{ S(Y|x) \geq S(y|x) \} =  e^{-S(y|x)} $ for $P_{X,Y}$-almost all $(x,y)$. Due to the independence of $X$ and $S(Y|X)$ 
 the distribution of $S(Y|X)$ coincides with the distribution of $S(Y|x)$ for $P_X$-almost all $x$. Thus 
 we have $P_{Y} \{  S(Y|x) \geq S(y|x) \} = P_{X,Y} \{  S(Y|X) \geq S(y|x) \}$ for  $P_{X,Y}$-almost all $(x,y)$, which shows that $S_{\tilde{N}}$ is an IT score.
 
 Then, define a joint distribution of $X,N_2,N_1$ by
 \[
 P_{N_1,N_2,X} :=  P_{N_1|S(Y|X)} P_{S(Y|X)} P_X,
 \]
with
\[
P_{N_1|S(Y|X)} := P_{N|S(Y|X)}.
\]
We observe that $\tilde{N} \independent X$ due to weak contraction because
$N_1 \independent X\,| N_2$ by construction and  $N_2 \independent X$ by assumption.  
 Further, $P_{N_1}$ has the same distribution as $P_N$ since $P_{N_1|S(Y|X)}= P_{N|S(Y|X)}$. Hence,
 \eqref{eq:mfcm} is also an FCM for $P_{Y|X}$.  Moreover, we have 
  \[
 S(\tilde{N}) = N_2 = S(Y|X)
 \]
 by construction.  Hence we have constructed an FCM for $S(Y|X)$.


\end{document}